\documentclass{article}


\usepackage[final,nonatbib]{neurips_2019}
\usepackage[normalem]{ulem}
\usepackage{xcolor}



\usepackage[utf8]{inputenc} 
\usepackage[T1]{fontenc}    
\usepackage{hyperref}       
\usepackage{url}            
\usepackage{booktabs}       
\usepackage{amsfonts}       
\usepackage{nicefrac}       
\usepackage{microtype}      
\usepackage{amsmath}
\usepackage{amsthm}
\usepackage{graphicx}
\usepackage{caption}


\newcommand{\R}{\mathbb{R}}
\newcommand{\cN}{\mathcal{N}}


\newcommand{\cA}{\mathcal{A}}

\newcommand{\cL}{\mathcal{L}}





\newcommand{\Ex}{\mathbb{E}}

\newcommand{\norm}[1]{\lVert #1 \rVert}


\newtheorem{theorem}{Theorem}
\newtheorem*{theorem*}{Theorem}

\newcommand{\X}{\mathbf{X}}

\newcommand{\x}{\mathbf{x}}

\newcommand{\z}{\mathbf{z}}

\newcommand{\zi}{\z^{(i)}}



\newcommand{\bdelta}{\boldsymbol{\delta}}


\title{Invert and Defend: Model-based Approximate Inversion of Generative Adversarial Networks for Secure Inference}

%

\author{%
  Wei-An Lin\thanks{Equal contribution.} \\
  University of Maryland \\
  \texttt{walin@umd.edu} \\
  \And
  Yogesh Balaji$^*$ \\
  University of Maryland \\
  \texttt{yogesh@cs.umd.edu} \\
  \AND
  Pouya Samangouei\\
  Butterfly Network \\
  \texttt{pouya@butterflynetwork.com} \\
  \And
  Rama Chellappa \\
  University of Maryland \\
  \texttt{rama@umiacs.umd.edu} \\
}

\begin{document}

\maketitle

\begin{abstract}
    Inferring the latent variable generating a given test sample is a challenging problem in Generative Adversarial Networks (GANs). In this paper, we propose InvGAN - a novel framework for solving the inference problem in GANs, which involves training an encoder network capable of inverting a pre-trained generator network without access to any training data. Under mild assumptions, we theoretically show that using InvGAN, we can approximately invert the generations of any latent code of a trained GAN model. Furthermore, we empirically demonstrate the superiority of our inference scheme by quantitative and qualitative comparisons with other methods that perform a similar task. We also show the effectiveness of our framework in the problem of adversarial defenses where InvGAN can successfully be used as a projection-based defense mechanism. Additionally, we show how InvGAN can be used to implement reparameterization white-box attacks on projection-based defense mechanisms. Experimental validation on several benchmark datasets demonstrate the efficacy of our method in achieving improved performance on several white-box and black-box attacks. Our code is available at \url{https://github.com/yogeshbalaji/InvGAN}.

\end{abstract}

\section{Introduction}\label{sec:intro}

Generative Adversarial Networks (GANs) is one of the most successful frameworks used for generative modeling. Significant research progress in GANs over the last few years has pushed boundaries in generation capabilities and has made possible the synthesis of photo-realistic images of human faces~\cite{karras2019StyleGAN,karras2018progressive} and objects~\cite{BigGAN_2018}. A fundamental problem involving GANs is the problem of inversion -- given a test image, what is the most likely latent code that generates the test sample? The inversion problem is extremely challenging since the generator network in GANs is highly non-linear and non-injective. Inversion has applications in several machine learning problems e.g. domain adaptation~\cite{Bousmalis_2017_CVPR,Hoffman_cycada2017}, compressed sensing \cite{bora2017compressed}, adversarial defenses~\cite{samangouei2018defense}, and anomaly detection~\cite{Schlegl2017AnoGAN}.

In this work, we propose a novel approach for addressing the inversion problem in GANs. Our approach is model-based where the mapping from image space to latent space is represented as a parametric function. We solve for the parameters of this function by sampling the latent codes from the noise distribution of the GAN and making sure that (a) the inversions produced from the generated samples are close to the sampled codes (b) the generated images of the inversions semantically match the GAN generations  and (c) the distribution of inverted images follows the distribution of the GAN. Our method is a data-free inversion mechanism i.e., given a pre-trained generator network, no access to input dataset is needed. This is particularly important in privacy-preserving learning scenarios in which the data provider does not intend to publicly release the data due to privacy reasons, but instead releases a GAN model trained on this data satisfying several privacy constraints~\cite{Xie2018DPGAN}. Our approach can invert such a GAN model.

In addition to comparing the reconstruction performance with previously proposed encoder-GAN models, we demonstrate the effectiveness of our inversion approach for the problem of adversarial defenses. The vulnerability of deep neural networks to small imperceptible perturbations has been demonstrated in several recent papers~\cite{szegedy2013intriguing, goodfellow2014explaining, carlini2017towards, Moosavi2017universal}, and this poses a huge threat in security-critical application domains where these networks are used. 

\begin{figure}\label{fig:overview}
   \centering
   \includegraphics[width=0.95\linewidth]{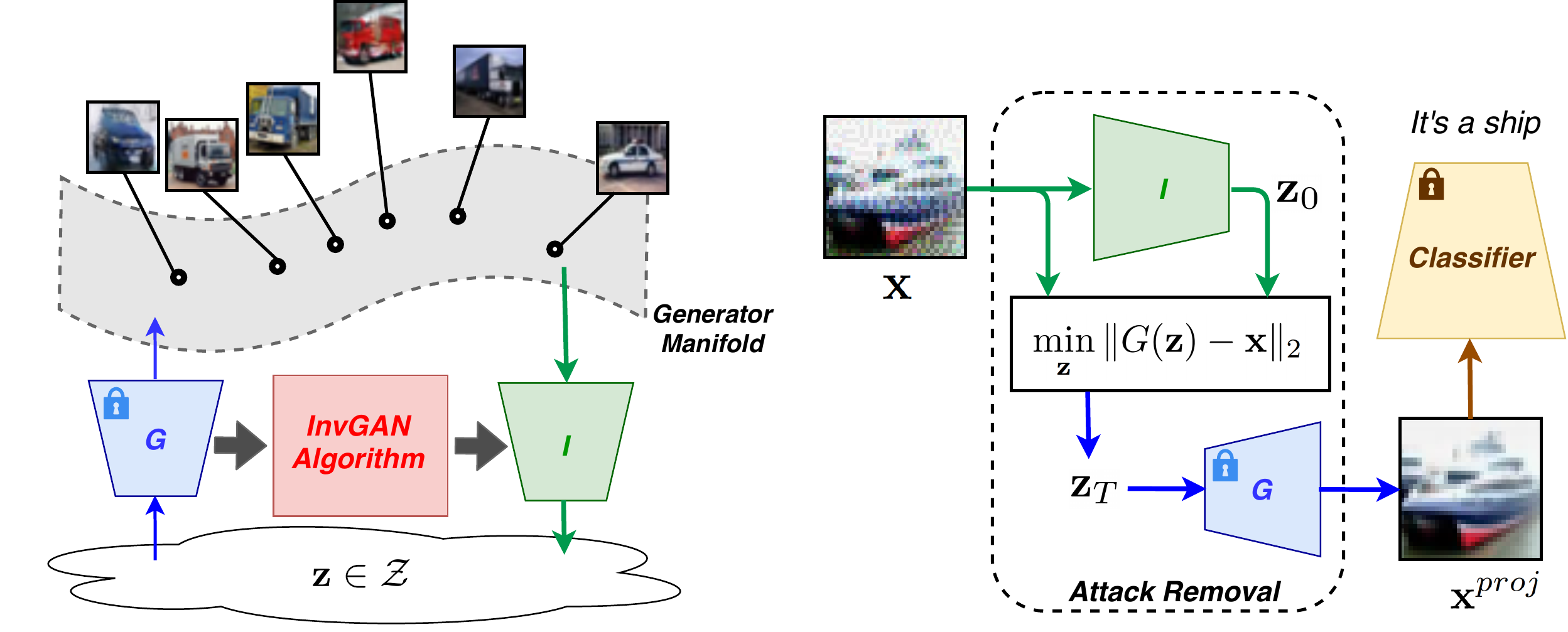}
   \caption{Overview of the proposed InvGAN framework. Left: Given a pre-trained generator $G$ and no data, we solve for $I$ to approximately invert the generator. Right: The application of InvGAN in adversarial defenses, where InvGAN can be used to project an adversarially perturbed sample $\x$ onto the generator manifold, and the projected sample $\x^{proj}$ can be used to make a robust prediction.}
\label{fig:onecol}
\end{figure}

    One of the successful defense strategies proposed recently is DefenseGAN~\cite{samangouei2018defense}, which uses a GAN as a defense mechanism. In DefenseGAN, the inference step is used to project a test sample onto the GAN's manifold to remove possible attacks. The inversion is posed as a non-convex optimization problem which is solved using Gradient Descent. However, this needs careful hyper-parameter tuning per dataset and generator, is extremely slow in practice, and does not scale well for deeper generator models. In this work, we propose an inference procedure to speed up DefenseGAN. More specifically, we use our trained encoder network to initialize the optimization problem. Using this initialization, the inference problem can be solved in very few iterations while preserving the quality of reconstructions. This leads to effortless hyperparameter tuning, a dramatic speed up in runtime, and improved reconstruction results.

Finally, we demonstrate how InvGAN can be used to create adversarial attacks for projection-based defense mechanisms that do not allow gradient computation for crafting white-box attacks. The idea, called ``reparameterization" \cite{athalye2018obfuscated}, is to approximate the projection operation using a differentiable function and derive gradient-based attacks using this approximation. We investigate how InvGAN can be used to implement this attack and analyze its performance in this context. 


In summary, the main contributions of this work are as follows:
\begin{itemize}
    \item [1] A model-based and data-free approach for approximately inverting pre-trained generator networks.
    \item[2] Significant improvements to Defense-GAN yielding improved defense performance, runtime speed up, and attack detection.
    \item [3] An implementation of reparameterization white-box attack \cite{athalye2018obfuscated} and analysis of its effectiveness across different datasets and methods.
\end{itemize}


\section{Background}\label{sec:background}


\subsection{Inverting generative models}

While significant research has focused on improving the quality, stability and diversity of GANs~\cite{BigGAN_2018,karras2019StyleGAN,karras2018progressive}, there has been relatively less work on the inversion problem despite its practical significance. The method in \cite{lipton2017precise} poses inversion as a non-convex optimization problem, which is then solved using projected gradient descent with stochastic clipping. A similar optimization with logistic loss has been proposed in \cite{creswell2018inverting}. While the above two methods are model-free and work for a pre-trained generator, they are extremely slow and deliver poor reconstructions for harder datasets like CIFAR-10.

Another line of work involves modifying the GAN objective to support generation and inference in a unified framework. They typically involve training an encoder function that maps from the image space to the latent space jointly with the generator function in GANs. Methods presented in~\cite{dumoulin2017ALI,donahue2016bigan} trains the encoder network using an adversarial loss on (latent, image) pairs. The method proposed in \cite{Ulyanov2018ALI} uses adversarial and reconstruction loss in latent space to train the encoder. While these methods enable fast inference, modifying the GAN objective to support inference affects the quality of generator models. Our approach offers the best of both worlds -- fast inference and ability to perform inference on a pre-trained generator, thus preserving the quality of generative models.



\subsection{Adversarial attacks and defenses}

Adversarial attacks are imperceptible changes crafted to the input samples by an adversary to flip a model's prediction $\hat{y} = f(\x)$. In this work, we focus on classification problems, hence, $f(\x) \in \{1, \cdots ,c \}$. The most common form of adversarial attacks are additive perturbations where a norm-bounded perturbation $\bdelta$ is added to the input sample $\X \in \R^{w \times h \times c}$ as $\tilde{\X} = \X + \bdelta$. The strength of the attack correlates inversely with the norm-bound of the perturbation -- stronger the attack, lower is the bound~\cite{goodfellow2014explaining}. Adversarial attacks can broadly be grouped into two major classes -- \textit{white-box} and \textit{black-box} attacks. White-box attacks assume full knowledge of the network structure and parameters of the model to craft an attack. These methods typically find the perturbation by solving an optimization involving the gradients of the loss function with respect to the input $\nabla_\x J(\x, y, \theta)$, where $J$ is the classification loss, $(\x, y)$ is a data-point, and $\theta$ corresponds to the model parameters~\cite{goodfellow2014explaining, carlini2017towards, kurakin2016adversarial}. Black-box attacks on the other hand assume no access to the model, and construct attacks just by using the model's input-output response.

To protect the classifiers from adversarial attacks, one line of research focuses on removing the perturbation from input samples before feeding them to the classifier. MagNet~\cite{Meng2017MagNet} detects and reforms adversarial images using autoencoders. Defense-GAN~\cite{samangouei2018defense} uses a generative adversarial network to model the image manifold. Adversarial perturbations are removed by projecting the samples onto the learned manifold. A similar idea has been used in PixelDefend~\cite{song2018pixeldefend} where the input distribution is modeled using a PixelCNN, and adversarial perturbations are removed from the input samples using greedy decoding. These projection-based defense methods take advantage of the operations that leads to \emph{obfuscated gradients} which results in the inability to derive gradients to craft white-box attacks. In \cite{athalye2018obfuscated}, BPDA and reparameterization attacks are most related to this work. The basic idea of BPDA is to approximate the non-differentiable operations with surrogates (e.g., identity function) during backpropagation. Adversarial attacks can then be crafted on the target classifiers using the gradients derived from the surrogates. Reparameterization uses the change-of-variables trick to circumvent the operations resulting in vanishing gradients.

\section{Method}\label{sec:method}

The generator network in a GAN model $G(\z):\R^d \rightarrow \R^{w \times h \times c}$ maps a latent code $\z \in \R^d$ to an image in $\R^{w \times h \times c}$. The inversion problem is to find the inverse mapping i.e., given a test sample $\x$, we are interested in finding a latent $\hat{\z}$ such that $G(\hat{\z}) \approx \x$. This problem is extremely hard as most generator networks used in practice are non-convex and non-bijective functions. One common approach to this problem~\cite{lipton2017precise, samangouei2018defense} involves solving the following optimization problem:
\begin{align}\label{eq:opt_simple}
    \min_\z \| G(\z) - \x \|_2
\end{align}
This optimization is extremely hard due to the non-convexity of the objective function. Solving this requires multiple random initialization of $\z$, carefully tuned learning rate and number of optimization steps for each dataset. Also, this optimization is extremely slow, and scales poorly with increasing complexity of the generator network and the input distribution. 

To fix these issues, we propose using an inverter network $I_{\theta_I}(\x): \R^{w \times h \times c} \rightarrow \R^d$ that maps a sample from the image space to the latent space as an initialization for $\z$ in \eqref{eq:opt_simple} . The goal of the inverter network is to approximately invert the generator network. We would like to emphasize that exact inversion is not possible as the generator function is non-bijective.


\begin{theorem}
Let $\{ G(\zi)\}_{i=1}^{N}$ represent the generated samples corresponding to a pre-trained generator function $G$, with the noise vectors $\zi \sim \cN(0, 1)$. Let $I(\cdot)$ represent an inverter function that is trained to achieve approximate inversion on the training set, i.e.,
\begin{align*}
    \| I(G(\zi)) - \zi \| < \epsilon, \quad \forall \zi, i \in [n].
\end{align*}
Let $L$ be the Lipschitz constant corresponding to the composite function $I \circ G(\cdot)$. Then, for $\epsilon' > \epsilon$, with probability $1 - o(1)$,
\begin{align*}
    \| I(G(\z)) - \z \| < \epsilon', ~~ \mbox{for} ~~ \z \sim \cN(0, 1).
\end{align*}
That is with high probability, the function $I(\cdot)$ approximately inverts the generator $G(\cdot)$. 
\end{theorem}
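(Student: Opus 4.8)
The plan is to treat the training codes $\{\zi\}_{i=1}^N$ as a random net of the Gaussian's typical set and then propagate the approximate-inversion bound through the Lipschitz constant of $I\circ G$. Fix
\begin{align*}
    r := \frac{\epsilon' - \epsilon}{L+1} > 0 ,
\end{align*}
which is the spatial resolution at which the net must be fine. The theorem will follow once we show that, with probability $1-o(1)$ over the training draw and the fresh draw, there is some $i$ with $\norm{\z - \zi} < r$.

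First I would truncate to a Euclidean ball. Since $\norm{\z}^2$ is a $\chi^2_d$ variable, Gaussian concentration gives, for any $\eta > 0$, a finite radius $R = R(d,\eta)$ with $\Pr[\norm{\z} > R] \le \eta$. Cover $B(0,R) \subset \R^d$ by finitely many cells $C_1,\dots,C_M$ of diameter strictly less than $r$; a volumetric bound gives $M \le (C R/r)^d < \infty$. Writing $p_j := \Pr[\z \in C_j]$ for a fresh $\z\sim\cN(0,1)$ and $J := \{ j : p_j > 0 \}$, the probability that some cell $C_j$ with $j\in J$ receives no training code is at most $\sum_{j\in J}(1-p_j)^N$, which tends to $0$ as $N\to\infty$ because $J$ is finite and each $(1-p_j)^N\to 0$. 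Let $A$ be the event that every $C_j$ with $j\in J$ contains at least one $\zi$, so $\Pr[A] = 1 - o(1)$. On $A\cap\{\norm{\z}\le R\}$ the fresh $\z$ lies in some cell $C_j$, which belongs to $J$ almost surely since $\z$ has a density, and that cell contains some $\zi$ with $\norm{\z-\zi} \le \operatorname{diam}(C_j) < r$.

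On this event the Lipschitz hypothesis closes the argument:
\begin{align*}
    \norm{I(G(\z)) - \z}
    &\le \norm{I(G(\z)) - I(G(\zi))} + \norm{I(G(\zi)) - \zi} + \norm{\zi - \z} \\
    &< L\,\norm{\z-\zi} + \epsilon + \norm{\zi-\z}
    < (L+1)\, r + \epsilon = \epsilon' .
\end{align*}
The total failure probability is at most $\eta + o_N(1)$; since $\eta>0$ is arbitrary, a diagonal choice (given $\delta$, take $\eta=\delta/2$ and then $N$ large enough) shows the bound holds with probability $1-o(1)$.

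The step I expect to be the crux, and the one that should be phrased with care, is the covering step: the cell count $M \le (CR/r)^d$ is exponential in $d$, so the guarantee is only non-vacuous for $N$ correspondingly large, and the ``$o(1)$'' must be read as $N\to\infty$ with $d$, $\epsilon$, $\epsilon'$, $L$ held fixed. A related subtlety is that $L$ is the Lipschitz constant of $I\circ G$ while $I$ was itself fit to the sample, so strictly one should either posit a uniform Lipschitz bound for $I\circ G$ on the region of interest or restrict attention to $\z$ in a fixed compact set on which such a bound holds; with that caveat the argument above goes through verbatim.
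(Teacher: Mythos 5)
Your argument is correct and ends exactly where the paper's does --- the triangle-inequality decomposition $\norm{I(G(\z)) - \z} \le \norm{I(G(\z)) - I(G(\zi))} + \norm{I(G(\zi)) - \zi} + \norm{\zi - \z} \le (L+1)\norm{\z - \zi} + \epsilon$ is identical --- but the way you produce a training code $\zi$ within distance $r = (\epsilon'-\epsilon)/(L+1)$ of the fresh draw is genuinely different. The paper asserts, from $\chi^2$ concentration, that $P(\norm{\z - \zi} < \epsilon) \ge 1 - e^{-\frac{d}{18}(\frac{\epsilon^2}{4d}-1)^2}$ for each fixed $i$ and then boosts this over the $n$ samples; you instead truncate to a ball of radius $R$, cover it by $M \le (CR/r)^d$ cells of diameter below $r$, and run a coupon-collector argument to show that every positive-mass cell is hit by some training code once $N$ is large. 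Your route is the more defensible of the two: since $\tfrac{1}{2}\norm{\z - \zi}^2$ is $\chi^2_d$ and concentrates \emph{around} $d$, two independent standard Gaussian vectors in $\R^d$ sit at distance roughly $\sqrt{2d}$ with overwhelming probability, so the probability that a fixed $\zi$ lands within a small $\epsilon$ of $\z$ is exponentially small rather than exponentially close to one --- the paper's per-sample bound points the concentration inequality in the wrong direction, and your covering argument is what actually substantiates the conclusion. What your approach buys, beyond soundness, is an honest accounting of the cost: the guarantee is non-vacuous only once $N$ is on the order of $(CR/r)^d$, i.e.\ exponentially many training codes in the latent dimension, with the $o(1)$ read as $N \to \infty$ for fixed $d$, $L$, $\epsilon$, $\epsilon'$. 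The two caveats you flag --- that $L$ must be a Lipschitz constant valid uniformly (or at least on the truncation ball) even though $I$ was fit to the sample, and that the cell count is exponential in $d$ --- are exactly the right ones to record.
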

The above theorem states that under some smoothness conditions on the generator-encoder pair, if the encoder loss is bounded for every training sample, then the encoder approximately inverts the generator with high probability. Please refer to the supplementary material for the proof.


\subsection{Encoder training}
A natural way to train the encoder is to minimize the following loss function:
\begin{align} \label{eq:inv_basic}
    \min_{\theta_I} \mathbb{E}_{\x \in \X_{train}} \| \x - G(I_{\theta_I}(\x)) \|.
\end{align}
One issue with this objective arises from the non-surjective nature of the the generator network. There are many samples in the training set that cannot be represented by the generator network as it is not surjective. Enforcing the mean squared error (MSE) loss for such samples per Eq.~\eqref{eq:inv_basic} is not appropriate and leads to blurry reconstructions. Hence, we propose using the following losses for training the encoder.  

\paragraph{Approximate semantic consistency:}
To also make sure that our reconstructions are semantically consistent we add:
\begin{align}
    \cL_{semantic} = \Ex_{\z \sim p_z}\Big[\max(\norm{G(\z) - G(I(G(\z)))}, \eta) \Big]. \label{eq:semantic_loss}
\end{align}
The use of $L_2$ norm is not a good distance measure between two images, and minimizing the $L_2$ distance results in blurry reconstructions. Hence, we use hinge loss on $L_2$ norm in our formulation. The use of hinge loss in combination with adversarial loss (which we define later) searches for sharp reconstructions that are within $\eta$ $L_2$ norm ball of reconstruction error, instead of blurry reconstructions obtained by minimizing just the $L_2$ reconstruction error.

\paragraph{Latent code recovery:}
In addition to maintaining semantic consistency between the generated images and the inverted reconstructions, we recover the latent codes by making sure they are close to the sampled $\z$:
\begin{align}
    \cL_{latent} = \norm{\z - I(G(\z))}. \label{eq:latent_loss}
\end{align}

\paragraph{Inverted image distribution consistency:}
We want the images that are generated by the inversion $G(I(\X))$ to have the same distribution as the images that are generated from samples of the domain space of the generator $G(\z)$. Therefore, we add a discriminator at training time for which the the real samples are the generations $G(\z)$ and the fake samples are the inversions $G(I(G(\z))$:
\begin{align}
    \cL_{adv}(I, D) = \Ex_{\z \sim p_z}\big[\log(D_{\theta_D}(G(\z)))& - \log (1 - D_{\theta_D}(G(I(G(\z))))\big]. & \label{eq:dist_loss}
\end{align}
This adversarial loss is crucial to improving the quality of the reconstructions.

\subsection{Training} 
The encoder model is trained using a combination of the three loss terms introduced above. The objective function can be written as 
\begin{align*}
    \min_{\theta_I} \max_{\theta_D} \lambda_1 \cL_{adv}(I, D) + \lambda_2 \cL_{semantic}(I) + \lambda_3 \cL_{latent}(I).
\end{align*}
We set $\lambda_2=100, \lambda_1 = \lambda_3 = 1$ so that the semantic consistency gets enforced early on in the training, with the other two losses getting minimized gradually to correct for the distribution mismatch. We train in an iterative adversarial fashion to update the parameters of $I$ and $D$.

\subsection{Adversarial defenses}
The objective of adversarial defense mechanisms is to make the classifiers robust to any class of adversarial perturbation. In this work, we consider additive perturbations defined in Section~\ref{sec:background} -- the most common form of adversarial attacks used till date. However, our framework is general and can be extended to other forms of attacks as well. Given an adversarially perturbed image $\x^{adv} = \x + \bdelta$, projection-based defense mechanisms project the adversarial sample $\x^{adv}$ to the manifold representing the input dataset. In DefenseGAN~\cite{samangouei2018defense}, the image manifold is first modeled by training a GAN on the input dataset. The perturbed sample $\x^{adv}$ is then projected to the generative manifold by solving the optimization~\eqref{eq:inv_basic}.

In this work, we replace the inference step in DefenseGAN using our proposed InvGAN framework. Given a test sample $\x^{test}$, the approximate latent code is first obtained by passing through the inverter network $I(\cdot)$, which can then be used as an initialization to the optimization~\eqref{eq:inv_basic}:
\begin{align}
\z_0 &= I(\x^{adv}), \nonumber \\
\z_t &= \z_{t-1} - \alpha \nabla_{\z = \z_{t-1}}\norm{G(\z) - \x^{adv}}, \quad \forall t \in {1, 2, \hdots T}, \nonumber \\
\x^{proj} &= G(\z_T), \label{eq:adv_proj}
\end{align}
where $T$ denotes the number of inference iterations, and $\alpha$ is the learning rate. Let $f(\x): \R^{w \times h \times c} \rightarrow \R^n_c$ denote a classification network trained to predict a sample $\x$ into one of the $n_c$ classes. The projected sample $\x^{proj}$ computed using Eq.~\eqref{eq:adv_proj} is passed to the trained classification network as $f(\x^{proj})$ to make a prediction for the sample $\x$.

\paragraph{Attack detection: } In addition to robust classification, we provide a framework for detecting adversarially perturbed samples. Let the trained classification network $f$ be decomposed as $f(\x) = C \circ \Phi(\x)$ where $C$ denotes the last layer of the network, and $\Phi$ denotes all layers except the final layer. $\Phi(\x)$ gives a feature representation of the image $\x$. We define attack detection score $\cA(\x)$ as 
\begin{align}\label{eq:det_score}
    \cA(\x) = \| \Phi(\x) - \Phi(\x^{proj}) \|.
\end{align}
By choosing a threshold on the attack detection score, adversarially perturbed samples can be detected. To compute the detection score, we measure the projection distance in feature space and not in image space (i.e., $\| \x - \x^{proj} \|$) as mean squared error is not a good measure of distance between two images. An ablation study showing the advantage of using feature-based distance over image based distance is shown in Section \ref{sec:experiments}.

\subsection{Reparameterization white-box attack}

The inability to compute gradients for projection-based defense methods such as DefenseGAN makes it hard to craft white-box attacks. To attack these models, \cite{athalye2018obfuscated} constructs a reparameterization attack where the input samples $x$ are reparameterized as $x = h(z)$ for some differentiable function $z$ such that $g(h(z)) = h(z),~\forall z$. Then, the gradients of the classifier can be computed using $f(h(z))$ as $h(\cdot)$ is differentiable. Consider $h(\cdot)$ as the generator function, and $g(\cdot)$ as our encoder-generator pair. Through our encoder training, we almost achieve perfect inversion (Section~\ref{sec:experiments}) which satisfies $g(h(z)) = h(z),~\forall z$. Hence for a given test sample $\x$, we construct the adversarial attack using the gradient $\nabla_x f(G(I(\x)))$.

\section{Experiments}\label{sec:experiments}
Our proposed approach is evaluated on four datasets: MNIST~\cite{mnist}, Fashion-MNIST~\cite{fmnist}, CIFAR-10~\cite{cifar}, and CelebA~\cite{celeba}. We pretrain GANs on all the datasets using the network architectures presented in~\cite{miyato2018spectral}. We use DCGAN architecture for MNIST and Fasion-MNIST, and GANs with residual blocks for CIFAR-10 and CelebA. The architecture of the inverter $I$ is the mirror image of the generator. The inverter network is trained for 100K iterations using the Adam optimizer with $\beta_1 = 0.5$, and $\beta_2 = 0.999$. Our implementation is based on Tensorflow~\cite{tensorflow} and Cleverhans~\cite{papernot2018cleverhans}.\footnote{The source code to reproduce all the experiments will be released after final decisions.}

\subsection{Inference}
In this experiment, we consider the task of inferring the latent representation $\z$ of an input image and reconstructing the input from the inferred $\z$. The closeness of the reconstructed image to the input illustrates the strength of the inversion scheme. The following quantitative metrics are considered for evaluation: (1) classification accuracy on a pre-trained classifier to assess whether the semantic information is retained in the reconstructed images, (2) Inception score (IS)~\cite{Salimans2016ImprovedTF}, (3) Fr\'{e}chet inception distance (FID)~\cite{Heusel2017GANsTB} of the reconstructed samples, (4) MSE between the input and reconstruction. The proposed InvGAN is compared with ALI~\cite{dumoulin2017ALI} and AGE~\cite{Ulyanov2018ALI} on the CIFAR-10 test set. We also consider the following baselines:
\begin{itemize}
    \item Direct optimization: $\z^* = \arg \min_\z \| G(\z) - \x \|_2$ is first solved by running gradient descent for 200 iterations. $G(\z^*)$ is then treated as the reconstructed image.
    \item InvGAN with $T=0$: The encoder-decoder scheme similar to ALI and AGE.
    \item InvGAN with $T=200$: The inference used at defense time.
\end{itemize}
For fair comparisons, in this experiment, we adopt the simple DCGAN architecture without residual blocks in~\cite{miyato2018spectral} on CIFAR-10. The results are presented in Table~\ref{table:inference} and Figure~\ref{fig:inversion_qualitative}. InvGAN with $T=0$ achieves the best IS, FID and accuracy than the competing methods, while direct optimization achieves the best MSE. However, lower MSE does not necessarily produce natural looking images (which is reflected in poorer inception and FID scores) since the MSE loss does not take semantic information into account. Also note that InvGAN only suffers slightly from running several steps of MSE updates. However, these optimization updates offer security against common end-to-end attacks as will be discussed in the following sections. 

\begin{table}[h!]
    \caption{Quantitative evaluation of inference on CIFAR-10 test set.}
    \label{table:inference}
    \centering
    \resizebox{0.7\textwidth}{!}{
    \begin{tabular}{@{}lcccc@{}}
    \toprule
         & MSE & IS & FID & Accuracy \\
    \midrule
    ALI     & $0.32 \pm 0.17$ & $6.12 \pm 0.15$ & $57.79$ & $0.35$      \\
    AGE     & $0.06 \pm 0.03$ & $6.43 \pm 0.15$ & $39.93$ & $0.43$      \\
    \midrule
    Direct Optimization & \textbf{0.03 $\pm$ 0.02} & $6.50 \pm 0.20$ & $40.18$ & $0.44$ \\
    InvGAN ($T=0$)  & $0.10 \pm 0.06$ & \textbf{7.72 $\pm$ 0.16} & \textbf{22.35} & \textbf{0.59} \\
    InvGAN ($T=200$)  & $0.08 \pm 0.04$ & $7.36 \pm 0.27$ & $23.91$ & \textbf{0.59} \\
    \bottomrule
    \end{tabular}
    }
\end{table}

\subsection{Defense against adversarial attacks}
In this section, we compare InvGAN $(T=1000)$ with Defense-GAN~\cite{samangouei2018defense}, Cowboy~\cite{Santhanam2018Cowboy}, and PixelDefend~\cite{song2018pixeldefend} in defending against white-box attacks. In~\cite{samangouei2018defense,song2018pixeldefend}, end-to-end attacks are either not considered or not successful due to the difficulty involved in gradient computation. Following these works, we consider crafting FGSM~\cite{goodfellow2014explaining} and CW~\cite{carlini2017towards} attacks on the classifier and feeding the adversarial images to our pipeline. In addition, we also experiment on end-to-end attacks using reparameterization and BPDA~\cite{athalye2018obfuscated}. 

\paragraph{Robust classification:}
We use MNIST, Fashion-MNIST, CIFAR-10, and CelebA for evaluation. For the FGSM attack, we select $\epsilon=0.3$ on MNIST and Fashion-MNIST, and $\epsilon=8/255$ on CIFAR-10 and CelebA. For the CW attack, we set the binary search step to 6, learning rate to 0.2, and number of iterations to 100. Table~\ref{table:white_box} shows the classification accuracy. We observe that InvGAN achieves improved performance compared to DefenseGAN, and offers defense against the BPDA attack. In Figure~\ref{fig:attack_removal}, we visualize attack removal for DefenseGAN and InvGAN under BPDA attack. For each method, the first row represents reconstruction on clean images, the second row shows the attacked images, and the third row shows the reconstructed attacked images. We observe that DefenseGAN partially reconstructs the attacked images, while InvGAN successfully removes the perturbation produced by BPDA.

\begin{table}[h!]
    \caption{Classification accuracy under different white-box attacks. We directly report the performance presented by the authors for methods marked in $^\dagger$. BPDA attack introduces intense noise in experiments marked with *. Visualization of the attack is presented in the supplemental material. } \label{table:white_box}
    \centering
    \small
    \begin{tabular}{@{}lcccccccccc@{}}
    \toprule
    \multicolumn{1}{c}{} & \multicolumn{5}{c}{MNIST} & \multicolumn{5}{c}{Fashion-MNIST} \\
    \cmidrule(lr){2-6} \cmidrule(l){7-11}
        & Clean & FGSM & CW & RP & BPDA & Clean & FGSM & CW & RP & BPDA \\
    \midrule
    No Defense & 0.99 & 0.18 & 0.01 & 0.72 &  -   & 0.92 & 0.06 & 0.06 & 0.41 &     \\
    Cowboy$^\dagger$     &   -  & 0.78 &  - & - & - &  -   & 0.44 &  -   &   -  &  - \\
    DefenseGAN & 0.98 & \textbf{0.83} & 0.96 & 0.92 & 0.79 & 0.81 & \textbf{0.34} & 0.80 & \textbf{0.54} & 0.59 \\
    InvGAN     & \textbf{0.99} & 0.78 & \textbf{0.99} & \textbf{0.92} & \textbf{0.87} & \textbf{0.88} & 0.28 & \textbf{0.87} & 0.49 & \textbf{0.71} \\
    \midrule
    \multicolumn{1}{c}{} & \multicolumn{5}{c}{CIFAR-10} & \multicolumn{5}{c}{CelebA} \\
    \cmidrule(lr){2-6} \cmidrule(l){7-11}
         & Clean & FGSM & CW & RP & BPDA & Clean & FGSM & CW & RP & BPDA \\
    \midrule
    No Defense  & 0.95 & 0.16 & 0.02 & 0.84 &  -   & 0.97 & 0.05 & 0.04 & 0.96 & -    \\
    Cowboy$^\dagger$      &   -  & 0.53 & -  & - & - & - & - & - & -      & - \\
    PixelDefend$^\dagger$ & \textbf{0.85} & 0.46 & - & - & 0.09 & - & - & - & - & -     \\
    DefenseGAN  & 0.44 & 0.43 & 0.44 & 0.44 & n/a* & 0.89 & 0.88 & 0.88 & 0.89 & n/a* \\
    InvGAN      & 0.66 & \textbf{0.60} & \textbf{0.58} & \textbf{0.61} & n/a* & \textbf{0.93} & \textbf{0.90} & \textbf{0.92} & \textbf{0.90} & n/a* \\
    \bottomrule
    \end{tabular}
\end{table}
\begin{figure}[!h]
    \centering
    \begin{minipage}{0.48\linewidth}
        \centering
        \includegraphics[width=0.9\textwidth]{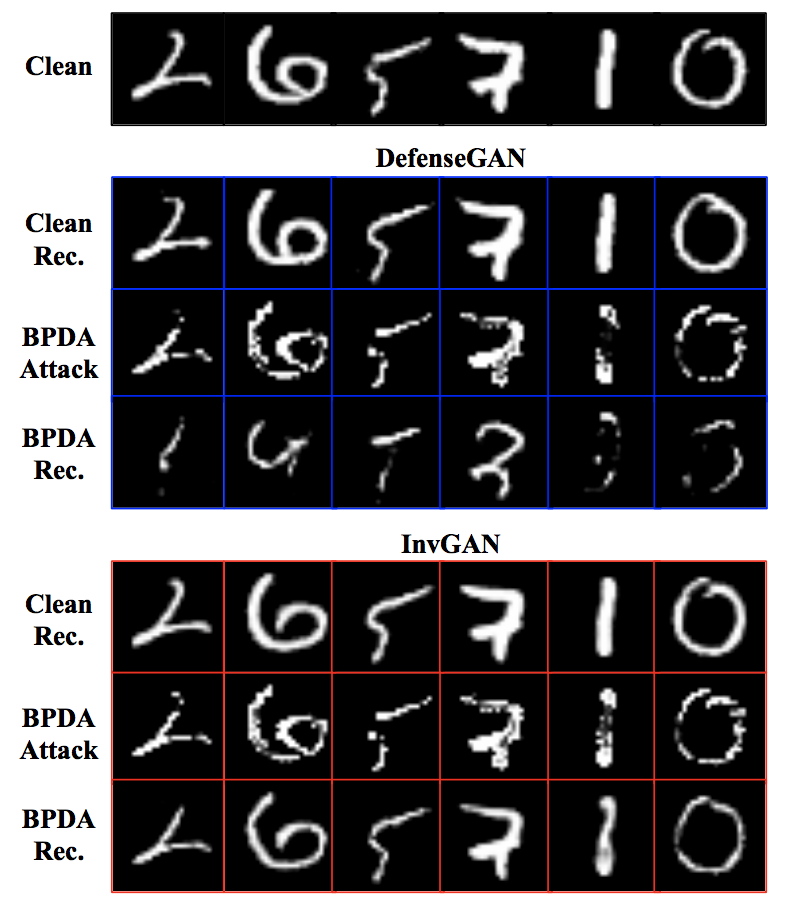}
        \caption{Illustration of adversarial attacks.}\label{fig:attack_removal}
    \end{minipage}
    \begin{minipage}{0.48\linewidth}
        \centering
        \includegraphics[width=\textwidth]{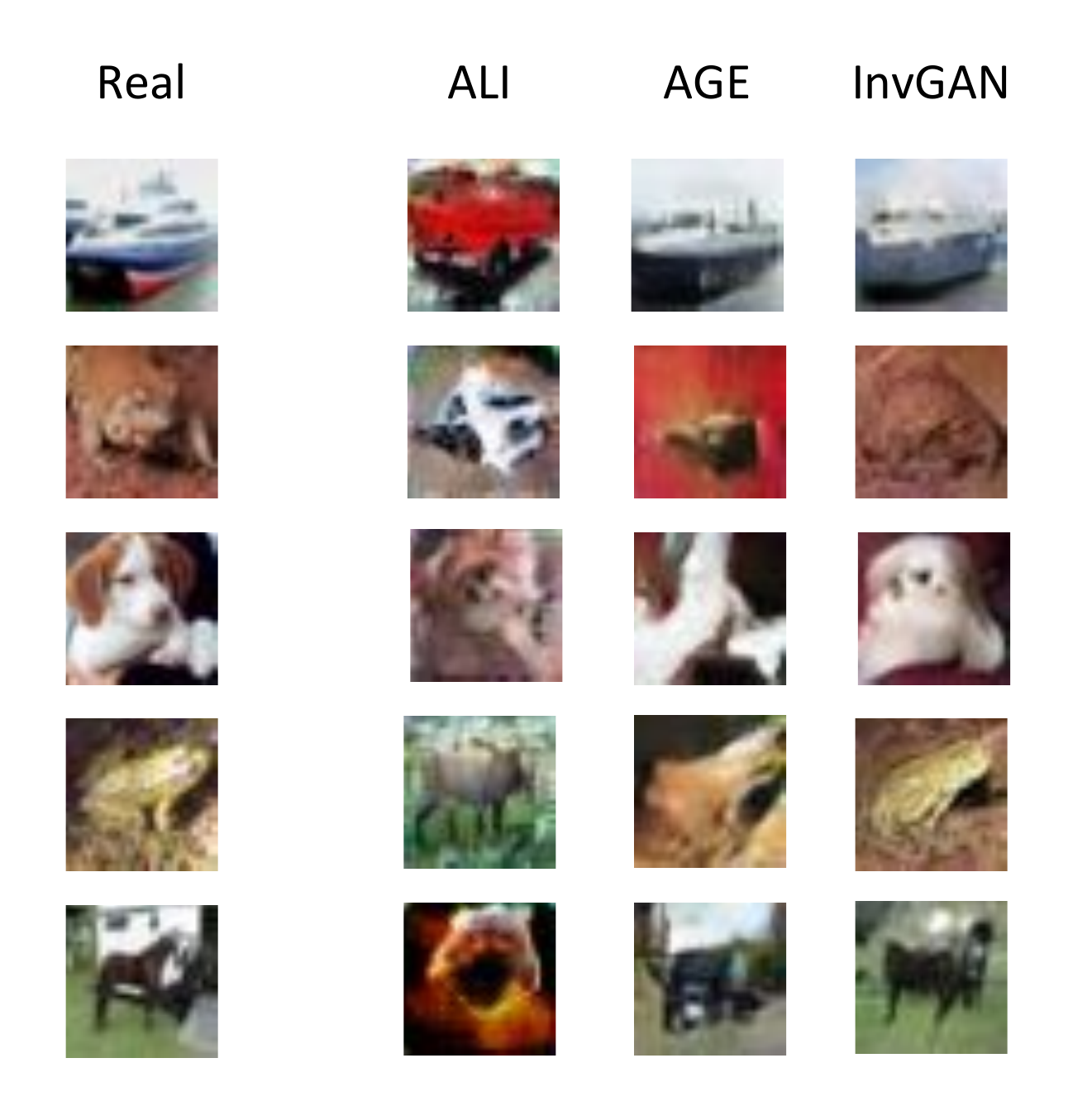}
        \caption{Qualitative inversion results.}
        \label{fig:inversion_qualitative}
    \end{minipage}
\end{figure}
\begin{table}[h!]  
    \centering 
    \caption{Attack detection performance (AUC) of DefenseGAN and InvGAN.} \label{table:detection}
    \resizebox{0.9\textwidth}{!}{
    \begin{tabular}{@{}l cccc cccc@{}}
    \toprule
        & \multicolumn{4}{c}{MNIST} & \multicolumn{4}{c}{Fashion-MNIST} \\
    \cmidrule(lr){2-5} \cmidrule(l){6-9}
        & FGSM & CW & RP & BPDA & FGSM & CW & RP & BPDA \\
    \midrule
    DefenseGAN & 0.9878 & 0.9661 & 0.9801 & 0.8706 & 0.9563 & 0.7752 & 0.9364 & 0.8999  \\
    InvGAN     & \textbf{0.9985} & \textbf{0.9880} & \textbf{0.9975} & \textbf{0.9210}  & \textbf{0.9932} & \textbf{0.8543} & \textbf{0.9845} & \textbf{0.9334} \\
    \midrule
    \multicolumn{1}{c}{} & \multicolumn{4}{c}{CIFAR-10} & \multicolumn{4}{c}{CelebA} \\
    \cmidrule(lr){2-5} \cmidrule(l){6-9}
        & FGSM & CW & RP & BPDA & FGSM & CW & RP & BPDA \\
    \midrule
    DefenseGAN  & 0.6524 & 0.6823 & 0.4537 & 0.5950 & 0.8753 & 0.7341 & 0.4488 & \textbf{0.7491} \\
    InvGAN      & \textbf{0.8132} & \textbf{0.8370} & \textbf{0.5469} & \textbf{0.7807} & \textbf{0.9042} & \textbf{0.7764} & \textbf{0.4781} & 0.7233 \\
    \bottomrule
    \end{tabular}
    }
\end{table}

\paragraph{Attack detection:}
In Table~\ref{table:detection}, we compare the area under ROC curve (AUC) scores for attack detection between DefenseGAN and InvGAN on different adversarial attacks. Comparing Table~\ref{table:white_box} and Table~\ref{table:detection}, we observe that for large perturbation settings (e.g. FGSM with $\epsilon=0.3$) where the classification accuracy of InvGAN is low, attacks can be detected easily. On the other hand, for minute perturbations (e.g. FGSM with $\epsilon= 8/255$ and CW), InvGAN successfully removes perturbation and achieves high accuracy, but the attack becomes more challenging to detect. This suggests that our attack detection and robust classification scheme offers orthogonal benefits -- when one fails, the other succeeds.

\paragraph{Black-box attack:} In addition to whitebox attacks, we compare InvGAN with DefenseGAN under black-box attack~\cite{papernot2017practical} in Table~\ref{table:blackbox}. We use the same procedure descried in \cite{papernot2017practical} and \cite{samangouei2018defense}.

\subsection{Run time comparison}
Measuring the run time of defense mechanisms is challenging as it depends on the implementation. We propose using the \emph{effective number of inference iterations} as a measure of run time, defined as the product of number of random restarts and the number of iterations per run. For InvGAN, the number of random restarts is always $1$ as we initialize it from the encoder. We report the classification accuracy of DefenseGAN and InvGAN on Fashion-MNIST for clean images by varying the effective number of iterations. The result is presented in Figure~\ref{fig:running_time}. InvGAN has a significantly shorter run time and offers reconstructions with improved semantic consistency. 

\begin{figure}[t]
    \centering
    \begin{minipage}{0.48\linewidth}
        \includegraphics[width=\textwidth]{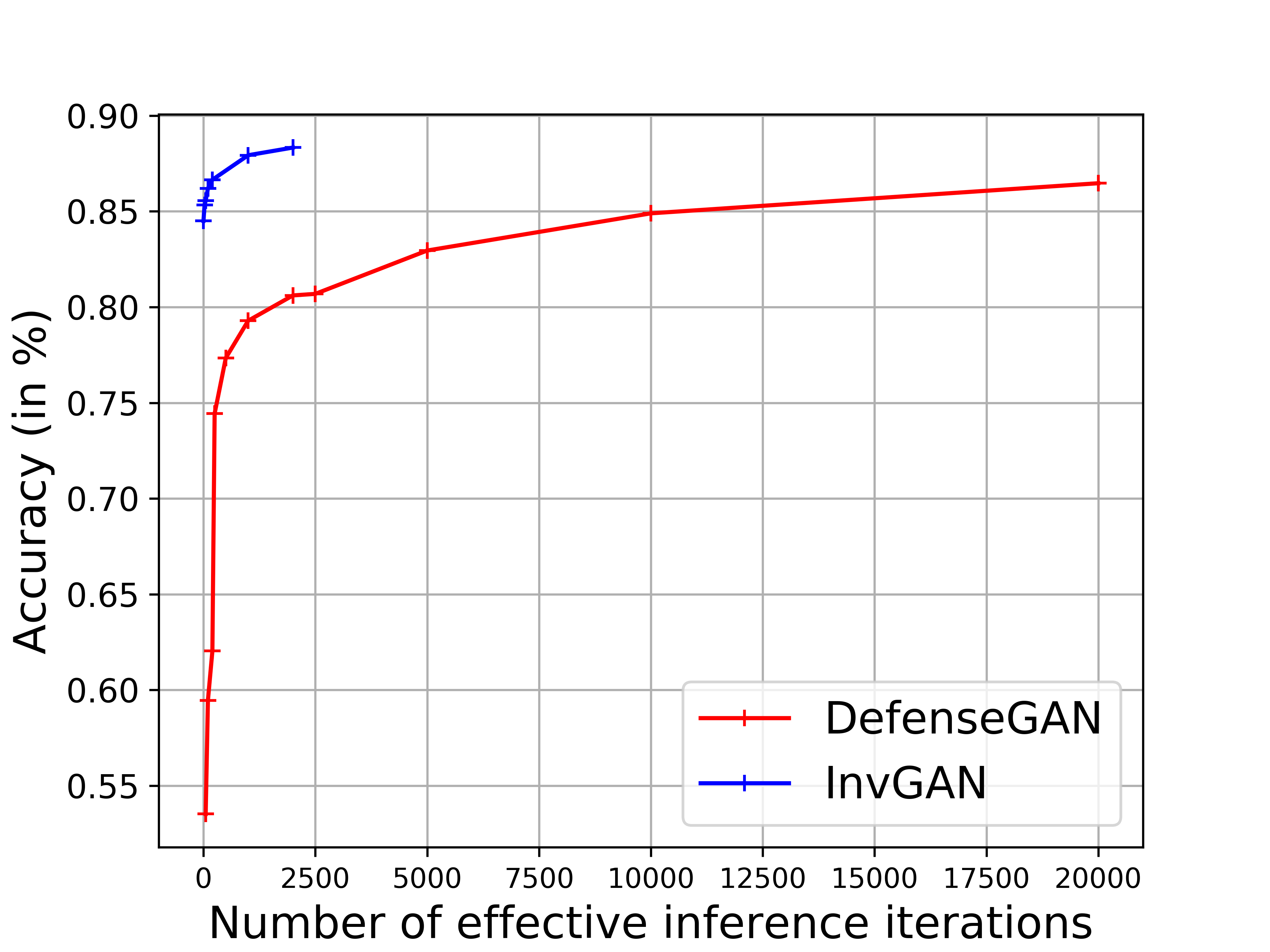}
        \captionof{figure}{Speed - accuracy trade-off curves.}
        \label{fig:running_time}
    \end{minipage}
    \begin{minipage}{0.48\linewidth}
        \centering
        \captionof{table}{Black-box attack performance comparison on MNIST and Fasion-MNIST.}\label{table:blackbox}
        \begin{tabular}{@{}lcc@{}}
        \toprule
                  & MNIST & FMNIST \\   
        \midrule
        No Defense & 0.6631 & 0.4988 \\
        DefenseGAN & 0.9398 & 0.6320  \\
        InvGAN & \textbf{0.9449} & \textbf{0.6596} \\
        \bottomrule
       \end{tabular}
    \end{minipage}
\end{figure}

\subsection{Ablation study} 
In this section, we study the effect of disabling the adversarial loss. More specifically, we train the encoder model by setting $\eta=0$, $\lambda_1=0$, $\lambda_2=1$, and $\lambda_3=0$ . As can be seen from Table \ref{table:ablation_inference}, although this network achieves a lower MSE, the images are perceptually less realistic, and is reflected in lower classification accuracy.
\begin{table}[h!]
    \caption{Analyzing the effect of adversarial loss in InvGAN.}
    \label{table:ablation_inference}
    \centering
    \begin{tabular}{@{}lcccc@{}}
    \toprule
         & MSE & IS & FID & Accuracy \\
    \midrule
    w/o $\cL_{adv}$ & $0.08 \pm 0.04$ & $7.54 \pm 0.17$ & $28.07$ & $0.603$ \\
    Ours & $0.10 \pm 0.06$ & \textbf{7.72 $\pm$ 0.16} & \textbf{19.85} & \textbf{0.625} \\
    \bottomrule
    \end{tabular}
\end{table}
\section{Conclusion}\label{sec:conclusion}

In this work, we introduce InvGAN -- a novel data-free and model-based inversion framework for solving the inference problem in GANs. Our approach involves training an encoder function capable of inverting the generator network back to the latent space. The encoder function is trained using a novel loss function that achieves superior inversion results compared to the contemporary methods performing inference. The usefulness of our inversion scheme is demonstrated in the problem of adversarial defenses, where our inversion scheme has been shown to achieve dramatic improvements in defense performance, run time and attack detection over DefenseGAN -- a projection-based defense mechanism using GANs.



\appendix

\section{Proof of Theorem 1}

\begin{theorem*}
Let $\{ G(\zi)\}_{i=1}^{N}$ represent the generated samples corresponding to a pre-trained generator function $G$, with the noise vectors $\zi \sim \cN(0, 1)$. Let $I(\cdot)$ represent an inverter function that is trained to achieve approximate inversion on the training set, i.e.,
\begin{align*}
    \| I(G(\zi)) - \zi \| < \epsilon, \quad \forall \zi, i \in [n].
\end{align*}
Let $L$ be the Lipschitz constant corresponding to the composite function $I \circ G(\cdot)$. Then, for $\epsilon' > \epsilon$, with probability $1 - o(1)$,
\begin{align*}
    \| I(G(\z)) - \z \| < \epsilon', ~~ \mbox{for} ~~ \z \sim \cN(0, 1).
\end{align*}
That is with high probability, the function $I(\cdot)$ approximately inverts the generator $G(\cdot)$. 
\end{theorem*}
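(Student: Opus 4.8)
The plan is to run a covering (epsilon-net) argument on the latent space, exploiting the Lipschitz continuity of $I \circ G$ together with the concentration of $\cN(0,\bI)$ on a ball of moderate radius. Write $h := I \circ G - \mathrm{id}$, so $h$ is $(L+1)$-Lipschitz (a sum of an $L$-Lipschitz and a $1$-Lipschitz map), and set $\delta := (\epsilon' - \epsilon)/(L+1) > 0$. The elementary fact driving everything: if an independent draw $\z$ satisfies $\norm{\z - \zi} \le \delta$ for some training index $i$, then $\norm{h(\z)} \le \norm{h(\zi)} + (L+1)\norm{\z - \zi} < \epsilon + (L+1)\delta = \epsilon'$. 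Hence it suffices to show that, with probability $1 - o(1)$, a fresh $\z \sim \cN(0,\bI)$ lands within distance $\delta$ of the training set $\{\zi\}_{i=1}^{N}$.

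First I would localize. Pick a radius $R_N \to \infty$ slowly, e.g.\ $R_N = \sqrt{d} + \sqrt{4\log N}$; by standard Gaussian concentration for the $1$-Lipschitz map $\z \mapsto \norm{\z}$ (see \cite{wainwright_2019}), $\Pr[\norm{\z} > R_N] = o(1)$. Cover the ball $\mathbb{B}(0,R_N)$ by $M_N \le (C R_N/\delta)^d$ cells of diameter at most $\delta$. Since $M_N$ grows only polynomially in $R_N$ and $R_N$ grows sub-polynomially in $N$, we have $\log M_N = o(N)$, which is what makes the next step work.

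Next comes the bucketing step, which disposes of cells of negligible Gaussian mass. Fix a threshold $\tau_N$ with $M_N \tau_N = o(1)$ and $\tau_N \gg (\log M_N)/N$ — both achievable simultaneously because $\log M_N = o(N)$. Call a cell \emph{heavy} if its Gaussian mass is at least $\tau_N$, \emph{light} otherwise; the union of light cells together with the complement of $\mathbb{B}(0,R_N)$ carries mass $o(1)$. For each heavy cell $Q$, $\Pr[\,\text{no }\zi \in Q\,] \le (1-\tau_N)^N \le e^{-\tau_N N}$, so a union bound over the at most $M_N$ heavy cells shows that, with probability $1 - M_N e^{-\tau_N N} = 1 - o(1)$ over the training draw, every heavy cell contains a training point. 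On that good event, a fresh $\z$ drawn independently of the training set and of $\theta_I$ falls into a heavy cell with probability $1 - o(1)$, and that cell, having diameter $\le \delta$, also contains some $\zi$; thus $\norm{\z - \zi} \le \delta$ and $\norm{h(\z)} < \epsilon'$. A final union bound over the two $o(1)$ failure events (bad training draw; fresh sample outside a heavy cell) yields the statement.

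The main obstacle — the place where "mild assumptions" quietly absorbs a lot — is the curse of dimensionality in $M_N \asymp (R_N/\delta)^d$: the good event is nontrivial only once $N$ is large relative to $\big(\sqrt{d}\,(L+1)/(\epsilon'-\epsilon)\big)^{d}$, and $L$ for a deep generator–inverter composite is typically enormous, so the conclusion is genuinely asymptotic in $N$ with $d, L, \epsilon, \epsilon'$ held fixed. A secondary point to keep straight is the independence bookkeeping: the $\zi$ are precisely the points the inverter was trained on, so $\norm{h(\zi)} < \epsilon$ is a hypothesis rather than something to establish, whereas the fresh $\z$ must be independent of the training sample and of the learned parameters, and $L$ should be read as a data-dependent but fixed property of the trained map $I \circ G$.
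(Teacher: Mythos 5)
Your proof is correct, but it takes a genuinely different---and sounder---route than the paper's. The paper argues pointwise: it invokes $\chi^2$ concentration to claim that, for each fixed $i$, $P(\|\z - \zi\| < \epsilon) \geq 1 - e^{-\frac{d}{18}(\frac{\epsilon^2}{4d}-1)^2}$, then multiplies the failure probabilities over the $n$ independent training points and finishes with the same triangle-inequality/Lipschitz decomposition of $I\circ G - \mathrm{id}$ that you use. The trouble is that the concentration inequality points the other way: $\|\z - \zi\|^2$ is a scaled $\chi^2_d$ variable concentrating around a value of order $d$, so for fixed $\epsilon \ll \sqrt{d}$ the event $\|\z - \zi\| < \epsilon$ is a lower-tail deviation whose probability is exponentially \emph{small} in $d$, not exponentially close to one; the paper's displayed bound should be an upper bound on that probability, and the final success probability does not actually tend to $1$ by that route. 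Your covering argument sidesteps this entirely: rather than asking any single $\zi$ to be close to the fresh sample, you show that the $N$ training points populate every cell of non-negligible Gaussian mass, so the nearest-neighbour distance falls below $\delta = (\epsilon'-\epsilon)/(L+1)$ with probability $1 - o(1)$ as $N \to \infty$. The price is explicit---the $(R_N/\delta)^d$ covering number and hence a sample-size requirement exponential in $d$ and in $\log(L+1)$---which you correctly flag, whereas the paper's bound appears (incorrectly) to improve with $d$ and already at $n=1$. Your Lipschitz step and independence bookkeeping match the paper's intent; only the probabilistic core differs, and yours is the one that actually establishes the asymptotic claim.
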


\begin{proof}

The input samples $\{ G(\zi)\}_{i=1}^{N}$ correspond to the training data for the inverter network. For any latent $\z \sim \cN(0, 1)$,
\begin{align*}
    P(\|\z - \z_i\|<\epsilon) \geq 1 - e^{-\frac{d}{18} (\frac{\epsilon^2}{4d} - 1)^{2}} \quad \forall i \in [n].
\end{align*}
The above inequality follows from the concentration bound for $\chi^2$ distribution~\cite{wainwright_2019} since $\frac{1}{4}\|\z - \z_i\|^2$ follows a $\chi^{2}_d$ distribution with $d$ degrees of freedom, where $d$ is the noise dimension. Then, 
\begin{align}\label{eq: z_bound}
    P(\exists \zi, i \in [n] \mbox{ s.t } \|\z - \zi\| < \epsilon ) \geq 1 - e^{-\frac{nd}{18} (\frac{\epsilon^2}{4d} - 1)^{2}}.
\end{align}

Eq.~\eqref{eq: z_bound} says that there exists at least one $\zi$ concentrated close to $\z$. Now, consider $\| I(G(\z)) - \z \|$. This can be expanded as 
\begin{align}
    \| I(G(\z)) - \z \| &= \| (I(G(\z)) - I(G(\zi))) + (I(G(\zi)) - \zi) + (\zi - \z) \| \nonumber \\
        &\leq \| I(G(\z)) - I(G(\zi)) \| + \| I(G(\zi)) - \zi \| + \| \zi - \z \| \nonumber \\
        &\leq (L+1)\| \z - \zi \| + \| I(G(\zi)) - \zi \| \label{eq: tri_ineq} \\
        &\leq (L+1)\| \z - \zi \| + \epsilon, \quad \forall i \in [n] \nonumber \\
        &\leq \min_{i \in [n]}(L+1)\| \z - \zi \| + \epsilon. \nonumber
\end{align}

This follows from Triangle inequality and the assumption on training loss. Using~\eqref{eq: z_bound} and~\eqref{eq: tri_ineq} for bounding $\| I(G(\z)) - \z \|$, we obtain

\begin{align*}
    P(\| I(G(\z)) - \z \| \leq \epsilon') &\geq P\Big( [ \min_{i \in [n]}(L+1)\| \z - \zi \| ] < \epsilon' - \epsilon \Big) \\
            & = P(\exists i \text{  }  \|\z - \zi \| < \frac{\epsilon' - \epsilon}{L+1}) \\
            & \geq 1 - e^{-\frac{nd}{18} (\frac{(\epsilon' - \epsilon)^2}{4d(L+1)^2} - 1)^{2}}.
\end{align*}

That is with probability $1 - o(1)$, $\| I(G(\z)) - \z \| \leq \epsilon'$. Please note that we assumed that $\epsilon' > \epsilon$. This concludes the proof.
\end{proof}

\paragraph{Remark: }

In the above equation, $P(\| I(G(\z)) - \z \| \leq \epsilon')$ is taken with respect to $(\{ \z^{(i)} \}, \z)$. We define the event
\begin{align}
    E_n = \{ \omega: \|I_{\z^n(\omega)}(G(\z(\omega))) - \z(\omega) \| > \epsilon'\}.
\end{align}
Based on the bound, we have $P(E_n) < \exp(-nC)$, and thus $\sum_{n=1}^\infty P(E_n) < \infty$. Applying the Borel-Cantelli lemma, we conclude that with probability 1, the event $E_n$ happens for only finitely many $n$.


\section{Additional qualitative results}

\subsection{Image reconstruction}
We visualize the reconstruction results for DefenseGAN and InvGAN on CIFAR-10 and CelebA in Figures~\ref{fig:cifar_recon} and~\ref{fig:celeba_recon}. We can observe that with the initialization provided by the inverter, the reconstructed images are semantically more consistent compared to the direct optimization approach adopted by DefenseGAN.

\begin{figure}[!h]
   \centering
   \includegraphics[width=1.0\linewidth]{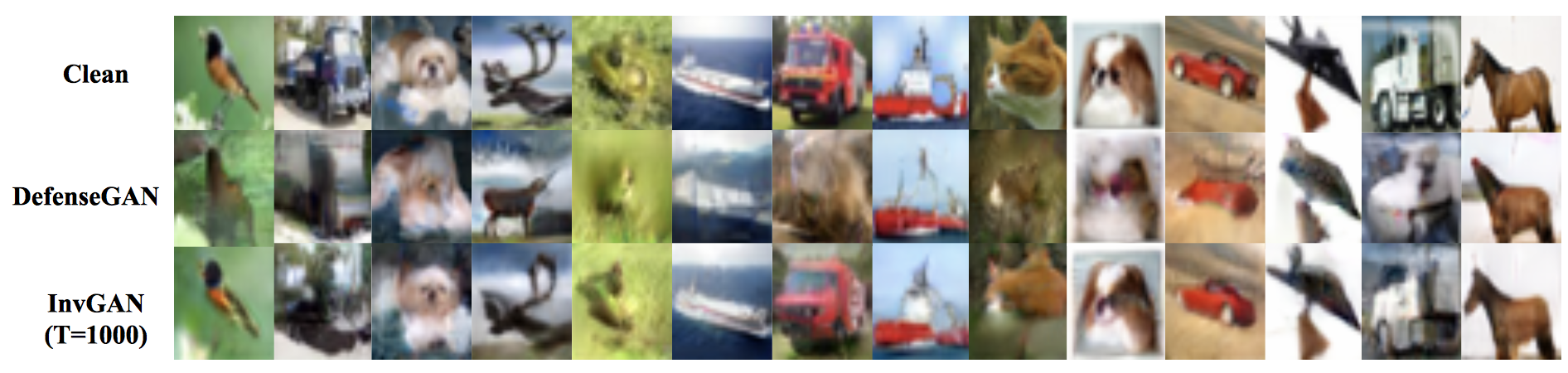}
   \caption{Reconstruction results on CIFAR-10. }\label{fig:cifar_recon}
\end{figure}
\begin{figure}[!h]
   \centering
   \includegraphics[width=1.0\linewidth]{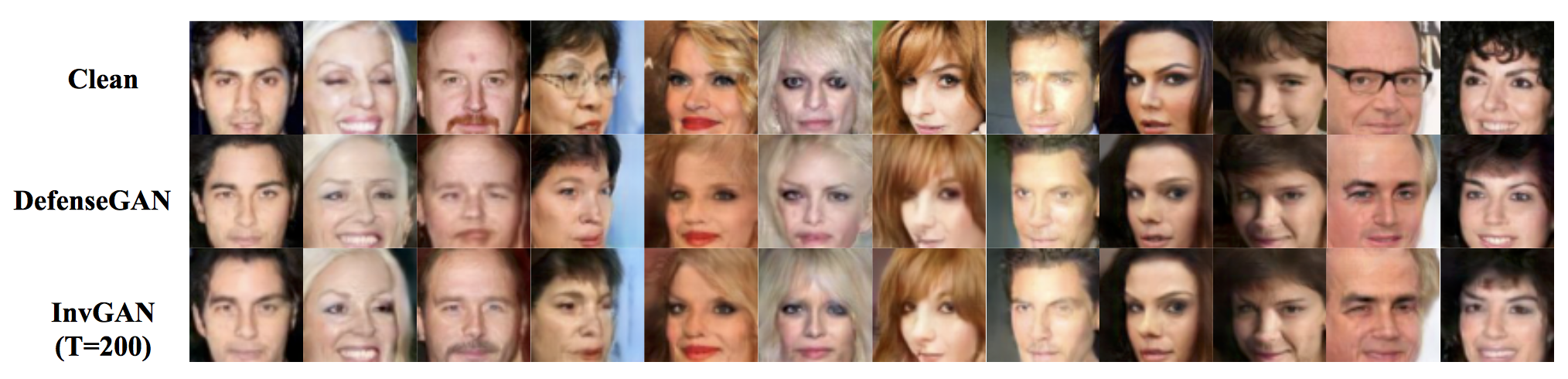}
   \caption{Reconstruction results on CelebA. }\label{fig:celeba_recon}
\end{figure}

\subsection{Attack removal}
We show sample results for InvGAN attack removal in Figure~\ref{fig:supp_mnist}. InvGAN effectively removes perturbations added by the attacks. 
\paragraph{Remark:} We observe that when the attack is mounted solely on the classifier (i.e. FGSM/CW on the classifier), the crafted perturbations are mainly focused on the non-stroke regions. That is, an attacker can easily manipulate the classification results of an unsecured classifier without modifying the semantic nature (i.e. the digit) of an image. In contrast, to mount attack on the end-to-end framework (i.e. BPDA on InvGAN + classifier), the perturbations have to be crafted to `erase' strokes.
\begin{figure}[!h]
   \centering
   \includegraphics[width=0.9\linewidth]{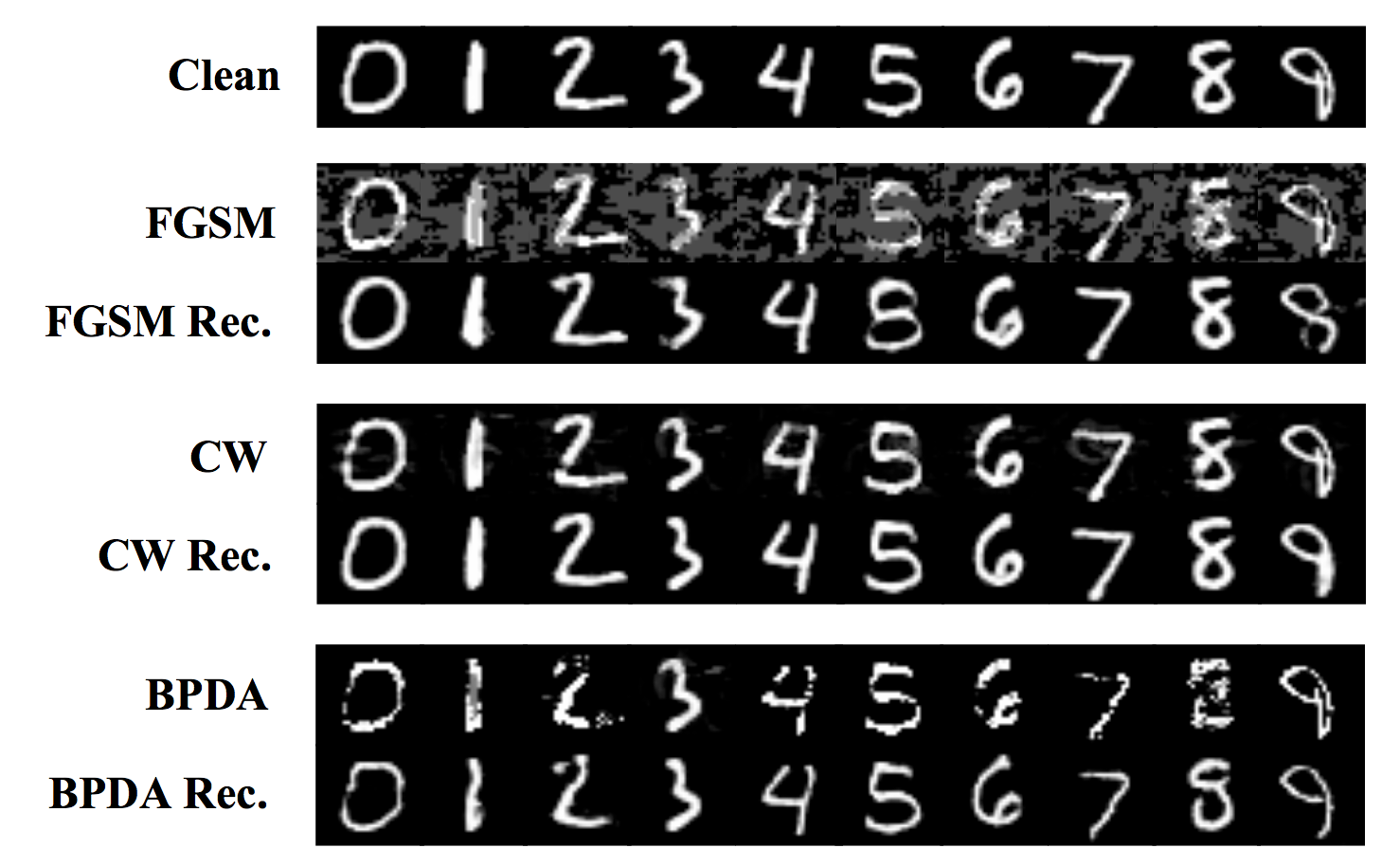}
   \caption{InvGAN attack removal results on MNIST. }\label{fig:supp_mnist}
\end{figure}

\subsection{BPDA attack}
We visualize the qualitative results of CW attack and BPDA attack on InvGAN model on CIFAR-10 dataset. We observe that while the perturbations produced by the CW attacks are imperceptible, BPDA attacks are very strong. In other words, it's hard to attack InvGAN with low perturbations. This is the reason why we did not include results on BPDA attack on CIFAR-10 and CelebA in Table 2 of the main paper.

\begin{figure}[h]
   \centering
   \includegraphics[width=1.0\linewidth]{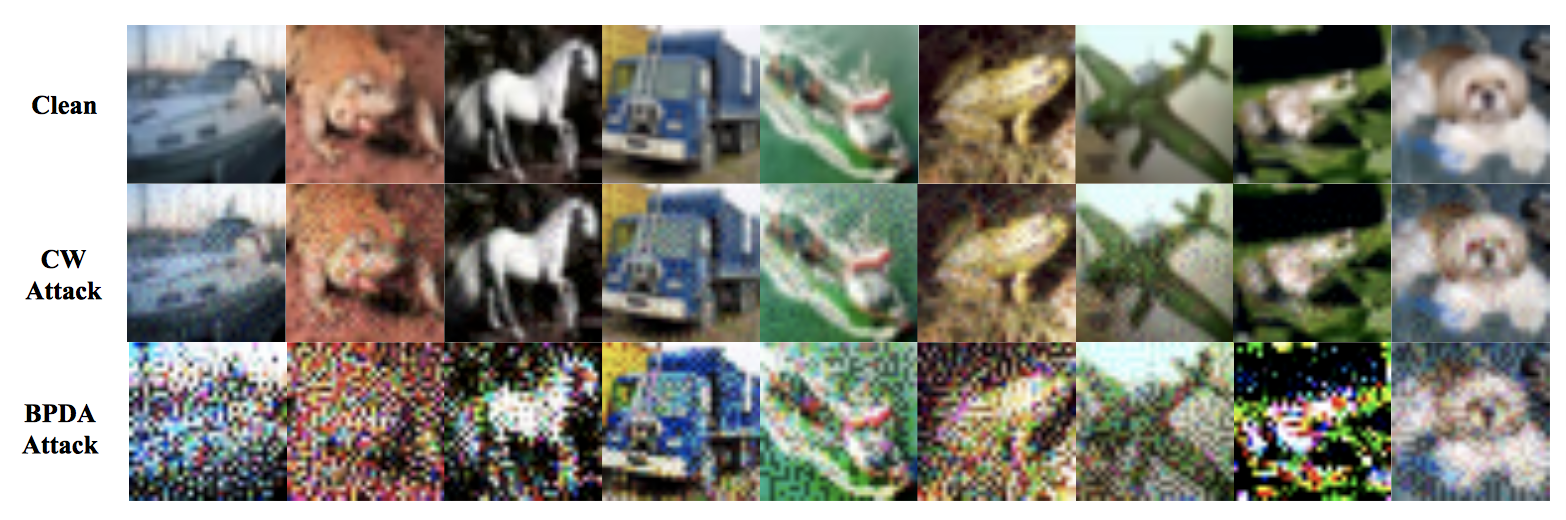}
   \caption{Sample images attacked by CW and BPDA. Clearly, BPDA mounts significant amount of perturbations on clean images. }\label{fig:supp_attack}
\end{figure}

\end{document}